\PassOptionsToPackage{table,dvipsnames}{xcolor}
\documentclass[]{bytedance_seed}

\usepackage{amsmath}
\usepackage{amssymb}
\usepackage{amsfonts}
\usepackage{mathtools}
\usepackage{amsthm}
\usepackage{array}
\usepackage{tabularx}
\usepackage{xspace}
\usepackage{pifont}
\usepackage{lscape}
\usepackage{enumitem}
\usepackage{float}
\usepackage{wrapfig}
\usepackage{algorithm}
\usepackage{algorithmic}

\newcolumntype{C}[1]{>{\centering\arraybackslash}p{#1}}

\definecolor{mygreen}{HTML}{009901}

\newcommand{\sysname}{\texttt{DLA}\xspace}

\theoremstyle{plain}
\newtheorem{theorem}{Theorem}[section]

\newtheorem{corollary}[theorem]{Corollary}
\theoremstyle{definition}

\theoremstyle{remark}

\title{Dynamic Linear Attention}

\author[1,*]{Xin Wang}
\author[2,*]{Hui Shen}
\author[2]{Boyuan Zheng}
\author[2]{Xueshen Liu}
\author[2]{Minkyoung Cho}
\author[1]{Zhongwei Wan}
\author[2]{Zesen Zhao}
\author[2]{Zhuoqing Mao}
\author[3]{Shen Yan}
\author[1]{Mi Zhang}

\affiliation[1]{The Ohio State University}
\affiliation[2]{University of Michigan}
\affiliation[3]{ByteDance Seed}
\contribution{\textsuperscript{*}Equal contribution.}

\abstract{The scalability of Large Language Models (LLMs) to long contexts is fundamentally constrained by the quadratic complexity of standard attention, motivating the adoption of linear attention mechanisms with sub-quadratic cost. To improve representation capacity under long contexts, recent approaches organize memory in a multi-state manner. However, existing multi-state linear attention methods rely on fixed state merging policies that cannot adapt to dynamically varying token importance, irreversibly obscuring critical tokens and causing severe error accumulation over long sequences.
To address this limitation, we propose \sysname, a dynamic memory modeling framework for multi-state linear attention. \sysname introduces (i) \textbf{\emph{Information-Aware Dynamic State Merging}}, which adaptively determines state boundaries based on token-level information variation, preserving high-resolution representations around semantic transitions while aggressively summarizing stable regions, and (ii) \textbf{\emph{Capacity-Bounded Memory Modeling}}, which maintains a fixed-size, chronologically ordered state cache by selectively merging adjacent low-information states to control memory growth with minimal information loss.
We pre-train \sysname on two different linear attention models and evaluate on 16 datasets across three categories. Experimental results demonstrate the superiority of \sysname over state-of-the-art.
}

\correspondence{Xin Wang at \email{wang.15980@osu.edu}, Mi Zhang at \email{mizhang.1@osu.edu}}

\begin{document}

\maketitle

\section{Introduction}
\label{sec:introduction}

Large Language Models (LLMs) have demonstrated remarkable capabilities across a wide range of natural language understanding and generation tasks. However, scaling LLMs to long-context settings remains a fundamental challenge due to the quadratic computational and memory complexity of standard self-attention~\citep{wan2023efficient, wang2024iot, DBLP:conf/iclr/WanWZXTZWLXW025}. This limitation has motivated extensive research on efficient attention mechanisms that enable long-sequence modeling without retraining from scratch. Among these approaches, linear attention~\citep{DBLP:conf/nips/YangWZSK24, DBLP:conf/iclr/YangKH25} has emerged as a promising direction, as it approximates full attention with sub-quadratic complexity and offers favorable scalability to long contexts.

To further improve the representation capacity of linear attention under long sequences, recent works organize historical context in a multi-state manner, where long token histories are partitioned into chunks and summarized into compact memory states. Representative methods such as Log-Linear Attention~\citep{DBLP:journals/corr/abs-2506-04761} demonstrate improved efficiency and practicality for long-context inference. By operating on summarized states rather than individual tokens, these approaches significantly reduce memory footprint and computation cost.

Despite their success, existing multi-state linear attention methods still suffer from notable performance degradation as context length increases. This limitation stems from a fundamental mismatch between fixed memory construction policies and the non-uniform, dynamically evolving information structure of long sequences. In particular, current methods typically rely on fixed block sizes or rule-based merging schedules, implicitly assuming uniform information density across the sequence. Such designs fail to adapt to dynamically emerging semantic transitions, forcing critical tokens to be prematurely absorbed into coarse summaries. Moreover, merge decisions made under fixed policies are irreversible: once heterogeneous tokens are compressed into a single state, their individual contributions cannot be recovered, leading to error accumulation.

These observations suggest that effective long-context linear attention requires memory modeling mechanisms that are both information-aware and capacity-controlled. On one hand, state construction should adapt to local representation variation, allocating higher resolution to semantically volatile regions while aggressively summarizing stable spans. On the other hand, the total number of memory states must be explicitly bounded to ensure predictable computation and memory cost during inference.

In this work, we propose Dynamic Linear Attention (\sysname), a new framework for multi-state linear attention that addresses these challenges. \sysname differs from prior approaches in two key aspects.
First, \sysname introduces Information-Aware Dynamic State Merging, which determines state boundaries on the fly based on token-level information variation. Instead of relying on fixed merging policies, \sysname evaluates the representation change of each incoming token relative to the current memory state, merging low-variation tokens while initiating new states at semantic transition points.
Second, \sysname incorporates Capacity-Bounded Memory Modeling, which maintains a fixed-size, chronologically ordered state cache. When the cache reaches its capacity, \sysname selectively merges adjacent low-information states, preserving temporal order while minimizing information loss.

We pre-train \sysname on two linear-attention backbones, Mamba-2-780M and Gated DeltaNet-1.3B, following the design in~\citep{DBLP:journals/corr/abs-2506-04761}. We evaluate \sysname on 16 datasets spanning three aspects: eight commonsense reasoning benchmarks, six in-context retrieval datasets, and two long-context modeling datasets.
We highlight three main findings. (1) \sysname consistently outperforms the state-of-the-art multi-state method, Log-Linear Attention, across all tasks. (2) When applied to Mamba-2, the \sysname variant even achieves performance comparable to full-attention Transformers with similar parameter budgets. (3) \sysname achieves superior efficiency, delivering higher throughput and lower runtime memory consumption than Log-Linear Attention. 

\section{Preliminary}
\label{sec:preliminary}

We consider a sequence modeling task with input length $T$ and hidden dimension $d$.
Let $\mathbf{Q}, \mathbf{K}, \mathbf{V} \in \mathbb{R}^{T \times d}$ denote the query, key, and value matrices.
Standard self-attention computes the output $\mathbf{O} \in \mathbb{R}^{T \times d}$ as
\[
\mathbf{O} = \mathrm{softmax}(\mathbf{Q}\mathbf{K}^\top \odot \mathbf{M})\mathbf{V},
\]
where $\mathbf{M}$ is the causal mask.
While effective, this operation incurs quadratic time and memory complexity in $T$,
motivating the development of sub-quadratic attention mechanisms.
In this section, we review linear attention and its multi-state variants that form the foundation of our approach.

\subsection{Linear Attention}

Linear attention mitigates the quadratic cost of Transformers by removing the softmax normalization,
enabling the reordering of computation via associativity.
A causal linear attention layer can be written in a parallel form as
\begin{equation}
\mathbf{O} = (\mathbf{Q}\mathbf{K}^\top \odot \mathbf{M})\mathbf{V},
\qquad \mathbf{M}_{ij} = \mathbb{I}(i \ge j).
\label{eq:linear_attn_parallel}
\end{equation}
This formulation admits an equivalent recurrent implementation.
Let $\mathbf{q}_t, \mathbf{k}_t, \mathbf{v}_t \in \mathbb{R}^d$ denote the query, key, and value vectors at time step $t$.
Linear attention maintains a single state matrix $\mathbf{S}_t \in \mathbb{R}^{d \times d}$
that summarizes all past tokens:
\begin{align}
\mathbf{S}_t &= \mathbf{S}_{t-1} + \mathbf{v}_t \mathbf{k}_t^\top, \\
\mathbf{o}_t &= \mathbf{S}_t \mathbf{q}_t.
\end{align}
This recurrent form enables linear-time inference with constant memory,
but compresses the entire history into a single state,
which can limit representation capacity under long contexts. We use $\phi(\cdot)$ to denote the feature map used in linear attention.
Unless otherwise specified, $\phi:\mathbb{R}^d \rightarrow \mathbb{R}^d$ is implemented as an identity mapping or a learnable linear projection, following prior work.

\subsection{Linear Attention with the Delta Rule}

To improve state tracking and introduce controlled forgetting,
DeltaNet~\citep{DBLP:conf/nips/YangWZSK24} extends linear attention with a delta-style update rule:
\begin{equation}
\mathbf{S}_t
=
\mathbf{S}_{t-1}(\mathbf{I} - \beta_t \mathbf{k}_t \mathbf{k}_t^\top)
+ \mathbf{v}_t \mathbf{k}_t^\top,
\label{eq:delta_rule_rnn}
\end{equation}
where $\beta_t$ is a data-dependent step size.
While this formulation improves adaptivity over a pure accumulator,
it still relies on a single global state and therefore cannot selectively preserve fine-grained information over long sequences.

\subsection{Multi-State Linear Attention}

To increase modeling capacity while retaining sub-quadratic complexity,
recent work organizes linear attention in a multi-state manner by partitioning the historical context
into segments and summarizing each segment into a separate state~\citep{DBLP:journals/corr/abs-2506-04761,DBLP:journals/corr/abs-2507-04416}.
Among them, log-linear attention~\citep{DBLP:journals/corr/abs-2506-04761}
replaces the single recurrent state with a logarithmic number of multi-scale states
constructed via a Fenwick-tree decomposition of the causal prefix. Concretely, at time step $t$, the prefix $[0,t]$ is decomposed into at most
$L=\lceil \log_2(t+1)\rceil+1$ disjoint buckets $\{B_t^{(\ell)}\}_{\ell=0}^{L-1}$,
with finer resolution near the current position and coarser resolution for distant history. The corresponding linear attention states and the final aggregated output are computed as:
\begin{equation}
\mathbf{S}_t^{(\ell)} = \sum_{s \in B_t^{(\ell)}} \mathbf{v}_s \mathbf{k}_s^\top \in \mathbb{R}^{d \times d}, \mathbf{o}_t = \sum_{\ell=0}^{L-1} \lambda_t^{(\ell)} \, \mathbf{S}_t^{(\ell)} \mathbf{q}_t
\end{equation}
This design achieves $O(T\log T)$ training complexity and $O(\log T)$ time and memory per decoding step. However, the granularity of its memory states is determined by a fixed hierarchical schedule, independent of token-level representation variation.
As a result, semantically salient tokens may be prematurely absorbed into coarse summaries, and disturbances introduced at critical positions can propagate through the fixed multi-scale states. This limitation motivates the need for information-aware and adaptive memory construction, which we address in the next section.



\section{Dynamic Linear Attention (DLA)}
\label{sec:methodology}

\begin{figure*}[t]
\centering
\includegraphics[width=1.0\textwidth]{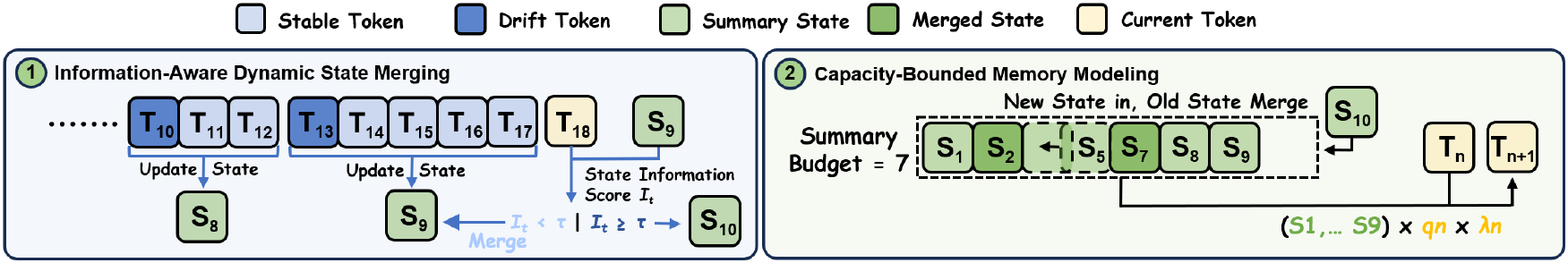}
\caption{Overview of \sysname.}
\label{fig:framework}
\end{figure*}

\cref{fig:framework} provides an overview of \sysname.
\sysname is an information-aware linear attention framework that dynamically constructs a compact set of memory states for efficient long-context modeling. Unlike prior approaches that rely on fixed temporal schedules or predefined block boundaries, \sysname adaptively determines state granularity based on token-level information variation.
Specifically, tokens are processed sequentially. For each new token, \sysname computes a lightweight \emph{State Information Score} measuring its representation change relative to the most recent memory state. Tokens with low information variation are merged into the current state, while tokens exhibiting significant drift initiate a new state. This enables fine-grained modeling around semantic transitions while aggressively summarizing stable token spans.
To bound memory and computation, \sysname maintains a capacity-limited state cache. When the cache reaches its maximum size, two adjacent states with the lowest information density are merged, preserving temporal order while minimizing information loss. The resulting memory consists of a fixed-size, chronologically ordered set of summary states.
At each decoding step, \sysname produces the output by attending over the maintained memory states using a linear attention formulation, where each state contributes with a query-dependent weight. Together, information-aware state construction and capacity-bounded memory modeling enable \sysname to achieve adaptive resolution, stable inference cost, and efficient long-context representation.

\begin{algorithm}[t]
\caption{Information-Aware Dynamic State Merging}
\label{alg:information_aware_dynamic_state_merging}
\begin{algorithmic}[1]
    \STATE \textbf{Input:} Token States $\{s_t\}_{t=1}^T$
    \STATE \textbf{Output:} memory states $\mathcal{M}=\{S_i\}$

    \STATE $\mathcal{M} \gets [\ ]$
    \FOR{$t = 1$ \textbf{to} $T$}
        \IF{$\mathcal{M}$ is empty}
            \STATE $\mathcal{M} \gets \{s_t\}$; \textbf{continue}
        \ENDIF
        \STATE $S \gets$ last state in $\mathcal{M}$
        \STATE $I_t  \gets \frac{\|s_t - S\|_F}{\|S\|_F + \epsilon}$
        \STATE replace last state in $\mathcal{M}$ with $\mathrm{Merge}(S,s_t)$
        \IF{$I_t \ge \tau$}
            \STATE append $s_t$ to $\mathcal{M}$; 
        \ENDIF
        
    \ENDFOR
    \STATE \textbf{return} $\mathcal{M}$
\end{algorithmic}
\end{algorithm}

\subsection{Information-Aware Dynamic State Merging}
\label{subsec:information_aware_dynamic_state_merging}
\textbf{Motivation:} Existing multi-block linear attention methods typically rely on fixed schedules (e.g., block and merge every $K$ tokens)~\citep{DBLP:journals/corr/abs-2506-04761} or hard, rule-based boundaries~\citep{DBLP:journals/corr/abs-2507-04416} to determine the block of historical tokens that should be merged into summary states. While such designs improve memory and compute efficiency, they are largely agnostic to the semantic evolution of the sequence. In practice, information density is highly non-uniform: critical semantic transitions may occur abruptly, whereas long stretches of tokens can be locally redundant. As a result, fixed or hard block policies often suffer from \textbf{two key limitations}. First, they cannot adapt to dynamically emerging semantic changes, forcing important transitions to be prematurely absorbed into coarse summaries simply because a pre-defined boundary is reached. Second, merge decisions made without regard to local semantic continuity are inherently irreversible: once tokens are merged under a fixed policy, their individual contributions cannot be recovered, even if subsequent context reveals their importance. These misalignments between merge decisions and the true semantic structure lead to sub-optimal generation and finally degrades the representation quality. 

In the following, we provide a theoretical proof on why the fixed merging policy is sub-optimal.
\begin{theorem}[State deviation]
\label{thm:summarization_deviation}
Let $\{u_t\}_{t=1}^T \subset \mathbb{R}^d$ denote per-token additive contributions to a linear attention state.
Consider a blocking policy $\pi$ of token list $\{1,\dots,T\}$ into $m$ disjoint contiguous blocks
$\{\mathcal{C}_i\}_{i=1}^m$. For each block $\mathcal{C}_i$, let $\bar{u}_i \in \mathbb{R}^d$ be a representative summary vector. Therefore, for any query vector $q \in \mathbb{R}^d$, the exact output $y(q)$ and the summarized output $\tilde y_{\pi}(q)$ for a query vector are: 
\begin{equation}
\label{eq:summarized_y}
y(q) \triangleq \sum_{t=1}^T \langle q, u_t\rangle, \quad \tilde y_\pi(q) \triangleq \sum_{i=1}^m \sum_{t\in\mathcal{C}_i} \langle q, \bar{u}_i\rangle
\end{equation}
The deviation induced by summarization $\operatorname{Err}(\pi ; q) \triangleq\left|y(q)-\tilde{y}_\pi(q)\right|$ admits the bound:
\begin{equation}
\label{eq:bound_main}
\big|y(q)-\tilde y_\pi(q)\big|
\;\le\;
\|q\|_2 \cdot \sum_{i=1}^m \sqrt{|\mathcal{C}_i|}
\;\sqrt{\sum_{t\in\mathcal{C}_i}\|u_t-\bar{u}_i\|_2^2}
\end{equation}
\end{theorem}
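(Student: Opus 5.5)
The bound follows from linearity of the inner product, the triangle inequality over blocks, Cauchy--Schwarz in $\mathbb{R}^d$, and Cauchy--Schwarz over the indices of each block.

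\emph{Step 1: decompose the error by block.} Since the blocks $\{\mathcal{C}_i\}_{i=1}^m$ partition $\{1,\dots,T\}$, the exact output can be regrouped as $y(q)=\sum_{i=1}^m\sum_{t\in\mathcal{C}_i}\langle q,u_t\rangle$. Subtracting $\tilde y_\pi(q)$ from \eqref{eq:summarized_y} and using bilinearity gives
\[
y(q)-\tilde y_\pi(q)=\sum_{i=1}^m \Big\langle q,\; \sum_{t\in\mathcal{C}_i}(u_t-\bar u_i)\Big\rangle .
\]

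\emph{Step 2: bound each block.} By the triangle inequality over $i$ and Cauchy--Schwarz in $\mathbb{R}^d$, the deviation is at most $\|q\|_2\sum_{i=1}^m \big\|\sum_{t\in\mathcal{C}_i}(u_t-\bar u_i)\big\|_2$. Applying the triangle inequality inside each block gives
\[
\Big\|\sum_{t\in\mathcal{C}_i}(u_t-\bar u_i)\Big\|_2\le \sum_{t\in\mathcal{C}_i}\|u_t-\bar u_i\|_2 .
\]

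\emph{Step 3: convert the $\ell_1$ sum to an $\ell_2$ sum.} Apply Cauchy--Schwarz to the scalar sum over the $|\mathcal{C}_i|$ indices of block $i$, pairing each term with the constant $1$:
\[
\sum_{t\in\mathcal{C}_i}\|u_t-\bar u_i\|_2\le \sqrt{|\mathcal{C}_i|}\,\sqrt{\sum_{t\in\mathcal{C}_i}\|u_t-\bar u_i\|_2^2}.
\]
Summing this over $i$ and multiplying by $\|q\|_2$ yields exactly \eqref{eq:bound_main}.

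No step is a genuine obstacle. The only point that needs care is the regrouping in Step 1, which requires the blocks to be disjoint and to cover every token; the theorem assumes this. Contiguity of the blocks and the particular choice of $\bar u_i$ play no role: the bound holds for any representatives. A sharper bound for the mean choice $\bar u_i=\frac{1}{|\mathcal{C}_i|}\sum_{t\in\mathcal{C}_i}u_t$ is not needed. The interpretation is that the bound is small exactly when every block has low within-block dispersion. This is what motivates boundaries driven by information variation rather than by a fixed schedule.
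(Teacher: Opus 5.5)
Your proof is correct and follows essentially the same route as the paper: you regroup the error blockwise, bound it by $\|q\|_2\sum_{i}\bigl\|\sum_{t\in\mathcal{C}_i}(u_t-\bar u_i)\bigr\|_2$, and then apply the triangle inequality and Cauchy--Schwarz inside each block. The only cosmetic difference is order: the paper applies Cauchy--Schwarz to the whole sum first and then the vector triangle inequality over blocks, while you use the scalar triangle inequality over blocks first and then Cauchy--Schwarz per block, which gives the same intermediate bound.
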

\begin{proof}
The deviation between the exact and summarized outputs $y(q)-\tilde y(q)$ can be further rewritten as:
\begin{align}
\sum_{i=1}^m \sum_{t\in\mathcal{C}_i} \langle q, u_t-\bar{u}_i\rangle
=
\left\langle q,\; \sum_{i=1}^m \sum_{t\in\mathcal{C}_i} (u_t-\bar{u}_i)\right\rangle
\end{align}

By Applying Cauchy--Schwarz~\citep{johnston2025generalizing} Inequality, we have:
\begin{equation}
\label{eq:cs1}
\operatorname{Err}(\pi ; q) = \big|y(q)-\tilde y_\pi(q)\big|
\le
\|q\|_2 \cdot
\left\|\sum_{i=1}^m \sum_{t\in\mathcal{C}_i} (u_t-\bar{u}_i)\right\|_2
\end{equation}
We then use Triangle Inequality~\citep{10.1145/1644893.1644914} over chunks to further get:
\begin{equation}
\label{eq:tri_chunks}
\left\|\sum_{i=1}^m \sum_{t\in\mathcal{C}_i} (u_t-\bar{u}_i)\right\|_2
\le
\sum_{i=1}^m
\left\|\sum_{t\in\mathcal{C}_i} (u_t-\bar{u}_i)\right\|_2
\end{equation}
Similarly, for each block $\mathcal{C}_i$, we also have:
\begin{align}
\left\|\sum_{t\in\mathcal{C}_i} (u_t-\bar{u}_i)\right\|_2
&\le
\sum_{t\in\mathcal{C}_i} \|u_t-\bar{u}_i\|_2 \notag \\
&\le
\sqrt{|\mathcal{C}_i|}\cdot
\sqrt{\sum_{t\in\mathcal{C}_i}\|u_t-\bar{u}_i\|_2^2}.
\label{eq:cs2}
\end{align}
By combining~\eqref{eq:cs1}, \eqref{eq:tri_chunks}, and~\eqref{eq:cs2}, we finally obtain the upper-bound $B(\pi ; q)$ of the deviation $\operatorname{Err}(\pi ; q)$:
\begin{equation}
\label{eq:bound}
B(\pi ; q) \triangleq\|q\|_2 \sum_{i=1}^m \sqrt{\left|\mathcal{C}_i\right|} \sqrt{\sum_{t \in \mathcal{C}_i}\left\|u_t-\bar{u}_i\right\|_2^2}
\end{equation}
This upper bound shows that the deviation induced by block-wise summarization is controlled by the within-block heterogeneity. As a result, content-agnostic fixed blocking policies, which do not adapt to representation variation, can incur a larger bound on non-stationary sequences, especially when tokens with large representation variance are mixed into the same block.
\end{proof}

\begin{corollary}[Fixed blocking is sub-optimal on non-stationary sequences]
\label{cor:fixed_suboptimal}
There exists a class of non-stationary token sequences for which any fixed blocking
policy $\pi_{\mathrm{fix}}$ yields a strictly larger deviation bound $B(\pi_{\mathrm{fix}};q)$
than an adaptive blocking policy $\pi_{\mathrm{dyn}}$ that aligns block boundaries with
semantic change points.
\end{corollary}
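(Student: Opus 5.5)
We prove the corollary by explicit construction, using the bound $B(\pi;q)$ from \cref{thm:summarization_deviation} with block means as summaries, $\bar u_i = \frac{1}{|\mathcal{C}_i|}\sum_{t\in\mathcal{C}_i} u_t$. The mean minimizes $\sum_{t\in\mathcal{C}_i}\|u_t-\bar u_i\|_2^2$, so this is the most favorable choice for any policy. The argument then reduces to showing that, on a suitable sequence family, every fixed policy has some block with positive within-block variance, while an adaptive policy has zero variance in every block.

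\textbf{Construction.} A fixed policy $\pi_{\mathrm{fix}}$ is one whose blocks depend only on $T$ (and possibly $m$), not on the contents $\{u_t\}$. Given such a $\pi_{\mathrm{fix}}$ with $m<T$, at least one block $\mathcal{C}_j$ has $|\mathcal{C}_j|\ge 2$. We consider piecewise-constant sequences: pick two distinct vectors $a\neq b$ in $\mathbb{R}^d$ and a change point $c$ with $u_t=a$ for $t\le c$ and $u_t=b$ for $t>c$. More generally, the class consists of sequences with at most $m$ constant segments whose change points are placed anywhere.

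\textbf{Two-sided comparison.} For the fixed policy, we choose $c$ strictly inside $\mathcal{C}_j$, meaning both $c$ and $c+1$ lie in $\mathcal{C}_j$. That block then contains both $a$ and $b$. Its mean differs from at least one of them, so $\sum_{t\in\mathcal{C}_j}\|u_t-\bar u_j\|_2^2>0$. Since all terms of $B$ are nonnegative, $B(\pi_{\mathrm{fix}};q)>0$ for every $q\neq 0$. The adaptive policy $\pi_{\mathrm{dyn}}$ puts a boundary at $c$ and partitions the remaining positions arbitrarily into $m$ blocks, which requires $m\ge 2$. Every block is then constant, so $B(\pi_{\mathrm{dyn}};q)=0$ and the inequality is strict.

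\textbf{Adversarial quantifier.} For the statement ``any fixed policy'' against a single class, we take the class to be all piecewise-constant sequences with at most $m$ segments. For each fixed $\pi_{\mathrm{fix}}$, the class contains a sequence, as constructed above, on which it strictly loses. Meanwhile $\pi_{\mathrm{dyn}}$ attains zero on every member. The dynamic policy of \cref{alg:information_aware_dynamic_state_merging} realizes this: with a threshold $\tau$ below the relative jump $\|b-a\|/\|a\|$ and above $0$, it opens a new state exactly at change points.

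\textbf{Main obstacle.} The delicate step is fixing the quantifiers. The claim must read as: for every fixed policy there is a non-stationary sequence in the class on which it is strictly worse. It should not be read as asserting that a single sequence defeats all fixed policies, since a fixed policy that happens to align with the change point would tie. We will also state the assumptions that $m<T$ and $m\ge 2$ (otherwise both policies coincide trivially), and that $q\neq 0$.
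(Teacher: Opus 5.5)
Your proof is correct and is essentially the paper's argument. Both use a two-segment piecewise-constant sequence with block means as summaries, a block straddling the change point with strictly positive heterogeneity (the paper computes it explicitly as $\frac{n_A n_B}{n_A+n_B}\|\mu_A-\mu_B\|_2^2$), and zero heterogeneity for the change-point-aligned policy; your pigeonhole step, which places $c$ inside a block of size at least $2$, makes explicit the straddling assumption that the paper simply posits.

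Your closing claim that the paper's merging algorithm realizes $\pi_{\mathrm{dyn}}$ for every $0<\tau<\|b-a\|_2/\|a\|_2$ is not needed and is not accurate without further assumptions. With additive merging, the within-segment score $(n-1)\|a\|_2/(n\|a\|_2+\epsilon)$ eventually exceeds any $\tau<1$ and creates spurious boundaries, so either drop the claim or state the merge and normalization semantics it relies on.
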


\begin{proof}[Proof sketch]
Consider a non-stationary sequence consisting of two contiguous segments
$\mathcal{A}$ and $\mathcal{B}$ with distinct means $\mu_A \neq \mu_B$.
For simplicity, assume $u_t=\mu_A$ for $t\in\mathcal{A}$ and $u_t=\mu_B$ for $t\in\mathcal{B}$
(a special case of $u_t\sim\mathcal{D}_A,\mathcal{D}_B$).

Let $\pi_{\mathrm{fix}}$ be any fixed blocking policy that yields at least one block
$\mathcal{C}$ overlapping both segments, and denote
$n_A = |\mathcal{C}\cap\mathcal{A}|$, $n_B = |\mathcal{C}\cap\mathcal{B}|$.
For this block, the choice $\bar u$ that minimizes $\sum_{t\in\mathcal{C}}\|u_t-\bar u\|_2^2$
is the block mean $\bar u = \frac{n_A\mu_A+n_B\mu_B}{n_A+n_B}$, and the minimum within-block
heterogeneity satisfies
\[
\sum_{t\in\mathcal{C}}\|u_t-\bar u\|_2^2
=
\frac{n_A n_B}{n_A+n_B}\,\|\mu_A-\mu_B\|_2^2
> 0 .
\]
In contrast, an adaptive policy $\pi_{\mathrm{dyn}}$ that places a boundary at the change point
produces blocks contained in $\mathcal{A}$ or $\mathcal{B}$ only, for which the optimal heterogeneity term is $0$ under the same construction. Since the deviation bound $B(\pi;q)$ is a sum of nonnegative per-block terms
$\|q\|_2\sqrt{|\mathcal{C}_i|}\sqrt{\sum_{t\in\mathcal{C}_i}\|u_t-\bar u_i\|_2^2}$,
the overlapping block $\mathcal{C}$ alone contributes a strictly positive amount to
$B(\pi_{\mathrm{fix}};q)$ for any $q\neq 0$, while $B(\pi_{\mathrm{dyn}};q)$ does not incur
this cross-segment term. Hence, there exists such a sequence for which
$B(\pi_{\mathrm{fix}};q) > B(\pi_{\mathrm{dyn}};q)$, proving the claim.
\end{proof}

\begin{figure*}[t]
\centering
\includegraphics[width=1\textwidth]{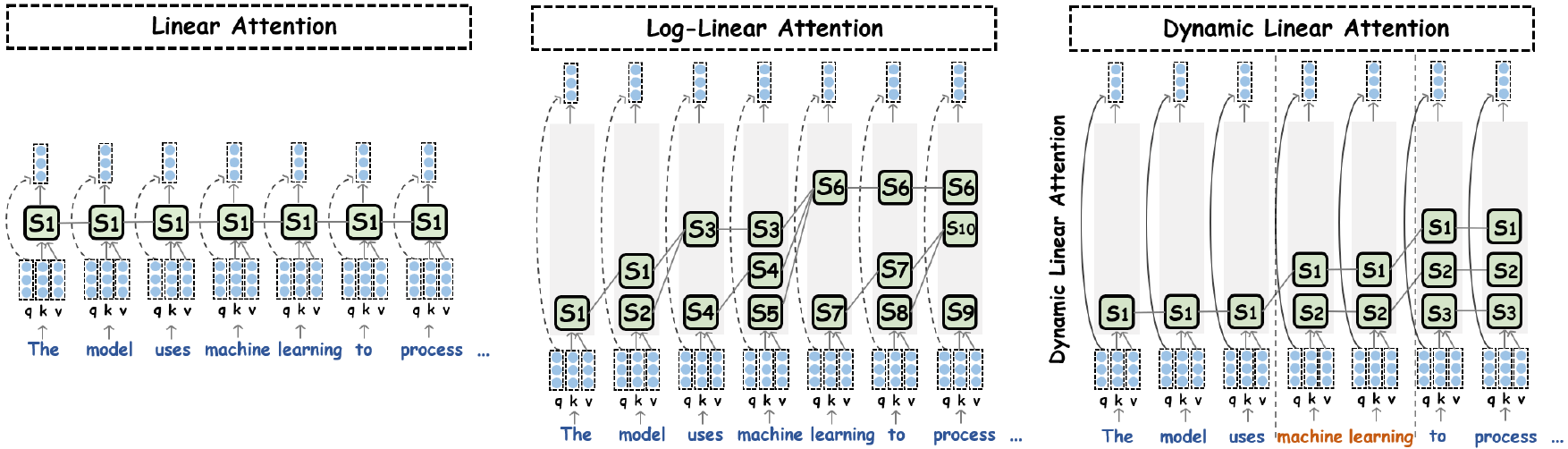}
\caption{Standard linear attention (left) vs. log-linear attention (mid) vs. dynamic linear attention (right). The input consists of query, key, and value vectors.}
\label{fig:comparison}
\end{figure*}
\textbf{Key Design:} The pseudocode of Information-Aware Dynamic State Merging of \sysname is provided in~\cref{alg:information_aware_dynamic_state_merging}. We also plot the difference between Vanilla Linear Attention, Log-Linear Attention, and our Dynamic Linear Attention (\sysname) in~\cref{fig:comparison}. Specifically, to dynamically determine whether a newly generated token $t$ should be merged into an existing memory state or initiate a new one, we first introduce a new metric named \textit{State Information Score} ($I_t$) to measure the amount of novel information carried by the current token relative to the most recent memory state. Concretely, let $s_{t} \triangleq \phi\left(k_t\right) v_t^{\top}$ denote the state of new token $t$, and let $S_{t-1}$ denote the previous memory state, which summarizes multiple past tokens. We quantify the information variation between $S_t$ and $S_{t-1}$ as follows:
\begin{align}
\label{eq:importance_score}
I_t = \frac{\| S_t - S_{t-1} \|_F}{\| S_{t-1} \|_F + \epsilon}
\end{align}
In practice, we apply RMSNorm to both $S_t$ and $S_{t-1}$ prior to score computation to further stabilize the scale across layers and timesteps. 
During inference, we measure the following boundary indicator
\begin{equation}
b_t \triangleq \mathbf{1}\left[I_t \geq \tau\right]
\end{equation}
Let $S_{t-1}^{\text {cur }}$ denote the most recent memory state in the cache. The state update rule at inference is then defined as
\begin{equation}
S_t^{\text {cur }}= \begin{cases}S_{t-1}^{\text {cur }} + s_t, & b_t=0, \\ S_t, & b_t=1,\end{cases}
\end{equation}
where $b_t=1$ indicates that the current token initiates a new memory state, while $b_t=0$ continues to accumulate information into the existing state. When $b_t=1$, the newly created state $S_t$ is appended to the memory cache, preserving the chronological order of states.

We apply soft gating to decide the boundary in a differentiable manner during pre-training and then switch to a hard segmentation strategy during inference to ensure that inference produces a discrete set of memory states aligned with semantic boundaries, while retaining the same information-aware criterion learned during training.
\paragraph{Discussion.}
Theorem~\ref{thm:summarization_deviation} shows the summarization deviation is dominated by the within-block heterogeneity term in~\cref{eq:bound}. Fixed blocking policies are content-agnostic and therefore may mix tokens from distinct semantic regimes into the same block, which yields a strictly larger deviation bound on non-stationary sequences (~\cref{cor:fixed_suboptimal}). In contrast, \sysname monitors token-level representation drift and only merges a new token when the induced increase of heterogeneity is small, using the State Information Score $I_t$ in~\cref{eq:importance_score}. Therefore, \sysname can be viewed as a greedy online strategy that approximately minimizes the dominant term in the deviation bound, while fixed policies ignore it, making it less competitive than \sysname.

\begin{algorithm}[t]
\caption{Capacity-Bounded Memory Modeling}
\label{alg:capacity_bounded_memory_modeling}
\begin{algorithmic}[1]
    \STATE \textbf{Input:} Incoming states $\{S_t\}$, Information Scores $\{\bar I_t\}$, Token Counts $\{n_t\}$, Capacity $K$, Queries $\{q_t\}$
    \STATE \textbf{Output:} attention outputs $\{o_t\}$

    \STATE $\mathcal{M} \gets [\ ]$ \COMMENT{state cache}
    \FOR{each $(S_i,\bar I_i,n_i)$ in time order}
        \IF{$|\mathcal{M}| = K$}
            \STATE $(i^\star,i^\star\!+\!1) \gets \arg\min_i \frac{\bar I_i+\bar I_{i+1}}{n_i+n_{i+1}}$
            \STATE $S_{i^\star} \gets S_{i^\star} + S_{i^\star+1}$
            \STATE $n_{i^\star} \gets n_{i^\star} + n_{i^\star+1}$
            \STATE $\bar{I}_{i^\star} \gets \bar{I}_{i^\star} + \bar{I}_{i^\star+1}$
            \STATE remove $S_{i^\star+1}$ from $\mathcal{M}$
        \ENDIF
        \STATE append $(S_i,\bar I_i,n_i)$ to $\mathcal{M}$
    \ENDFOR
    \STATE $o_t = \sum_{i} \phi(q_t)\, S_i$
    \STATE \textbf{return} $\{o_t\}$
\end{algorithmic}
\end{algorithm}

\subsection{Capacity-Bounded Memory Modeling}
\label{subsec:capacity_bounded_memory_modeling}
\textbf{Motivation:} 
While the previous design enables flexible and information-aware memory state construction, maintaining an unbounded number of states is impractical for efficient inference, especially in long-context or high-throughput serving scenarios, as dynamic memory growth leads to irregular memory layouts, variable attention costs, and reduced batching efficiency. To address these challenges, it is essential to explicitly limit the number of memory states while preserving the most informative summaries. 

\textbf{Key Design:} The pseudocode of Capacity-Bounded Memory Modeling of \sysname is provided in~\cref{alg:capacity_bounded_memory_modeling}. Specifically, \sysname maintains a state cache $\mathcal{M} = \{(S_i, n_i, \bar{I}_i)\}_{i=1}^{m}$ with $m \le K$, where $S_i \in \mathbb{R}^d$ denotes the $i$-th memory state in chronological order, $n_i$ is the number of tokens summarized by $S_i$, and $\bar{I}_i$ is an aggregated information score of all tokens in this state. We maintain $\bar{I}_i$ as the sum of per-token information scores within each state, such that $\bar{I}_i / n_i$ measures information density. Newly generated tokens are first converted to per-token representations $S_t$, and a tentative state is produced following~\cref{subsec:information_aware_dynamic_state_merging}. The resulting state is appended to the cache, preserving temporal order. When the cache is not full ($m < K$), we simply insert the new state. When the cache reaches capacity ($m = K$), we trigger a compression step that merges two \emph{adjacent} states to free one slot. Restricting merges to adjacent states preserves the temporal order and avoids distorting positional semantics. Concretely, among all consecutive pairs $(i, i\!+\!1)$, we select the pair with the lowest information density:
\begin{align}
(i^\star, i^\star\!+\!1) = \arg\min_{i \in \{1,\dots,K-1\}}
\frac{\bar{I}_i + \bar{I}_{i+1}}{n_i + n_{i+1}}
\end{align}
We then merge them using a summarization operator as in~\cref{subsec:information_aware_dynamic_state_merging},
\begin{align}
S_{i^\star} &\leftarrow S_{i^\star} + S_{i^\star+1}, \notag \\
n_{i^\star} &\leftarrow n_{i^\star} + n_{i^\star+1}, \quad
\bar{I}_{i^\star} \leftarrow \bar{I}_{i^\star} + \bar{I}_{i^\star+1}
\end{align}
and shift the remaining states accordingly to keep $m=K-1$ before inserting the incoming state.

Given the capacity-bounded cache $\mathcal{M}$, we compute the output at time step $t$ by attending over the stored memory states. Let $q_t$ denote the query vector of the current token. The final output is then computed as 
\begin{equation}
o_t
= \sum_{i=1}^{m} \lambda_{t,i} \,
q_t^\top
\left(
\sum_{s \in \mathcal{C}_i} v_s k_s^\top
\right)
= \sum_{i=1}^{m} \lambda_{t,i} \,
q_t^\top S_i ,
\label{eq:capacity_bounded_output}
\end{equation}
where $\lambda_{t,i}$ is the weight learned during pre-training with the same shape as the memory capacity. Following~\citep{DBLP:journals/corr/abs-2506-04761}, $\lambda_{t,i}$ is produced by a learned linear layer over the query representation and reused at inference. In this way, \sysname provides a unified way to read from a temporally ordered, capacity-bounded memory, enabling stable inference cost while retaining the ability to emphasize informative states during the inference.

\section{Experiments}
\label{sec:experiments}

\begin{table*}[t]
\centering
\caption{Performance comparison of \sysname and baseline methods on zero-shot commonsense reasoning tasks on Mamba-2 (780M) and Gated DeltaNet (1.3B). Commonsense reasoning datasets (LAMBADA, PIQA, HellaSwag, WinoGrande, ARC-e, ARC-c, OpenBookQA, and CommonsenseQA) are measured by accuracy (\textcolor{ForestGreen}{$\uparrow$}).  The best performance is marked in bold. The relative performance gain compared to the best-performing baseline is marked in green inside bracket. }
\label{tab:lm_cs}
\resizebox{\linewidth}{!}{%
\begin{tabular}{r|cccccccc|c}
\toprule
\textbf{Model}
& \textbf{LMB.\textcolor{ForestGreen}{$\uparrow$}} 
& \textbf{PIQA\textcolor{ForestGreen}{$\uparrow$}} 
& \textbf{Hella.\textcolor{ForestGreen}{$\uparrow$}} 
& \textbf{Wino.\textcolor{ForestGreen}{$\uparrow$}} 
& \textbf{ARC-e\textcolor{ForestGreen}{$\uparrow$}} 
& \textbf{ARC-c\textcolor{ForestGreen}{$\uparrow$}}
& \textbf{OBQA\textcolor{ForestGreen}{$\uparrow$}}
& \textbf{CSQA\textcolor{ForestGreen}{$\uparrow$}}
& \textbf{Average\textcolor{ForestGreen}{$\uparrow$}} \\
\midrule
Transformer  & 21.8 & 63.1 & 30.3 & 50.9 & 44.2 & 17.7 & 16.8 & 18.0 & 32.9 \\
\midrule
Mamba-2  & 15.7 & 58.9 & 29.3 & 50.1 & 46.0 & 18.9 & 15.4 & 20.3 & 31.8 \\
 w/ Log-linear   & 13.2 & 59.7 & 27.8 & 49.5 & 42.3 & 20.1 & 16.0 & 19.1 & 31.0 \\
\rowcolor{gray!15}
 w/ \sysname   &
 \textbf{18.7 (\textcolor{ForestGreen}{$\uparrow19\%$})} &
 \textbf{63.7 (\textcolor{ForestGreen}{$\uparrow7\%$})} &
 \textbf{30.8 (\textcolor{ForestGreen}{$\uparrow5\%$})} &
 \textbf{51.5 (\textcolor{ForestGreen}{$\uparrow3\%$})} &
 \textbf{48.1 (\textcolor{ForestGreen}{$\uparrow5\%$})} &
 \textbf{22.1 (\textcolor{ForestGreen}{$\uparrow10\%$})} &
 \textbf{17.4 (\textcolor{ForestGreen}{$\uparrow9\%$})} &
 \textbf{21.1 (\textcolor{ForestGreen}{$\uparrow4\%$})} &
 \textbf{34.2 (\textcolor{ForestGreen}{$\uparrow8\%$})} \\
\midrule
Gated DeltaNet & 20.3 & 58.8 & 29.6 & 51.3 & 44.7 & 20.2 & 16.0 & 21.3 & 32.8 \\
 w/ Log-linear   & 19.0 & 60.4 & 28.4 & 51.9 & 44.3 & 20.5 & 15.4 & 21.0 & 32.6 \\
\rowcolor{gray!15}
 w/ \sysname   &
 \textbf{20.8 (\textcolor{ForestGreen}{$\uparrow22\%$})} &
 \textbf{62.7 (\textcolor{ForestGreen}{$\uparrow4\%$})} &
 \textbf{30.3 (\textcolor{ForestGreen}{$\uparrow2\%$})} &
 \textbf{52.4 (\textcolor{ForestGreen}{$\uparrow2\%$})} &
 \textbf{45.0 (\textcolor{ForestGreen}{$\uparrow1\%$})} &
 \textbf{23.0 (\textcolor{ForestGreen}{$\uparrow14\%$})} &
 \textbf{17.4 (\textcolor{ForestGreen}{$\uparrow9\%$})} &
 \textbf{22.9 (\textcolor{ForestGreen}{$\uparrow8\%$})} &
 \textbf{34.8 (\textcolor{ForestGreen}{$\uparrow6\%$})} \\
\bottomrule
\end{tabular}%
}
\end{table*}

\subsection{Experimental Setups}
\label{subsec:setup}

\textbf{Baselines.} We compare \sysname against two groups of models: (1) Vanilla linear attention models, including Mamba-2-780M~\citep{DBLP:conf/icml/DaoG24} and Gated DeltaNet-1.3B~\citep{DBLP:conf/iclr/YangKH25}. (2) Multi-state linear attention models, including Mamba-2 with Log-linear Attention, and Gated DeltaNet with Log-linear Attention~\citep{DBLP:journals/corr/abs-2506-04761}. Following the design in previous work~\citep{DBLP:journals/corr/abs-2506-04761}, we also compare the \sysname version of Mamba-2-780M with full attention Transformers with 24 layers and 778M parameters.

\textbf{Datasets.} To demonstrate the generalizability of \sysname, we evaluate the performance of \sysname on 16 datasets covering three categories,
%
%
including eight commonsense reasoning datasets (LAMBADA~\citep{DBLP:conf/acl/PapernoKLPBPBBF16}, PIQA~\citep{DBLP:conf/aaai/BiskZLGC20}, HellaSwag~\citep{DBLP:conf/acl/ZellersHBFC19}, WinoGrande~\citep{DBLP:journals/cacm/SakaguchiBBC21}, OpenBookQA~\citep{DBLP:conf/emnlp/MihaylovCKS18}, CommonsenseQA~\citep{DBLP:conf/naacl/TalmorHLB19}, ARC-e, and ARC-c~\citep{DBLP:journals/corr/abs-2102-03315}),
six in-context retrieval datasets (SWDE~\citep{lockard2019openceres}, SQuAD~\citep{rajpurkar2018know}, FDA~\citep{arora2023language}, TriviaQA~\citep{joshi2017triviaqa}, Drop~\citep{dua2019drop}, NQ~\citep{kwiatkowski2019natural}),
and two long-context datasets (RULER~\citep{DBLP:journals/corr/abs-2404-06654} and LongBench~\citep{DBLP:conf/acl/BaiLZL0HDLZHDTL24}). All of the evaluations are conducted using the LM-Evaluation-Harness framework~\citep{eval-harness}.

\textbf{Implementation Details.} To ensure a fair comparison, we followed the same configuration used in Log-Linear Attention~\citep{DBLP:journals/corr/abs-2506-04761} to train the full attention Transformer-778M, Mamba-2-780M, Gated DeltaNet-1.3B, and their variants in Log-Linear and \sysname forms. Specifically,
%
%
we perform academic-scale language modeling pretraining from scratch using 50B tokens on the Long-Data-Collections dataset, using a sequence length of 16K. 
We set the capacity of the state cache in \sysname to 30, which is the same as the maximum state number in Log-linear attention.
All of our experiments are conducted on 4 NVIDIA A100 GPUs.

\subsection{Overall Comparison}
\label{subsec:overall_comparison}
We evaluate the overall performance of \sysname from three main aspects: (1) performance on commonsense reasoning tasks, (2) performance on in-context retrieval tasks, and (3) performance on long-context modeling tasks.


\textbf{Performance on Commonsense Reasoning.} Following prior work~\citep{DBLP:conf/icml/DaoG24}, we evaluate all models on eight commonsense reasoning benchmarks. Results are summarized in~\cref{tab:lm_cs}.
We make two key observations.
First, \sysname consistently outperforms both the vanilla and log-linear variants of linear-attention–based models across all tasks. In particular, compared to the log-linear variant, \sysname achieves up to 52\% and 22\% relative accuracy improvement on Mamba-2-780M and Gated DeltaNet-1.3B, respectively.
Second, when applied to Mamba-2-780M, \sysname also consistently outperforms a full-attention Transformer with a comparable parameter size, demonstrating that \sysname can close and even surpass the accuracy gap between linear attention and full attention.

\begin{table}[t]
\centering
\caption{Performance on in-context retrieval benchmarks measured by accuracy (\textcolor{ForestGreen}{$\uparrow$}). The best performance is marked in bold. The relative performance gain compared to the best-performing baseline is marked in green inside bracket.}
\label{tab:in-context-retrieval}
\resizebox{0.8\textwidth}{!}{%
\begin{tabular}{r|cccccc|c}
\toprule
\textbf{Model}
& \textbf{SQuAD\textcolor{ForestGreen}{$\uparrow$}} 
& \textbf{TriviaQA\textcolor{ForestGreen}{$\uparrow$}} 
& \textbf{SWDE\textcolor{ForestGreen}{$\uparrow$}} 
& \textbf{FDA\textcolor{ForestGreen}{$\uparrow$}} 
& \textbf{NQ\textcolor{ForestGreen}{$\uparrow$}} 
& \textbf{Drop\textcolor{ForestGreen}{$\uparrow$}}
& \textbf{Avg.\textcolor{ForestGreen}{$\uparrow$}} \\
\midrule
Mamba-2   & 22.4 & 13.6 & 17.4 & 0.0 & 0.2 & 3.4 & 9.5 \\
 w/ Log-linear   & 7.1 & 9.1 & 15.2 & 0.0 & 0.0 & 6.6 & 6.3 \\
\rowcolor{gray!15}
  w/ \sysname   &
  \textbf{28.1 (\textcolor{ForestGreen}{$\uparrow25\%$})} &
  \textbf{20.2 (\textcolor{ForestGreen}{$\uparrow49\%$})} &
  \textbf{26.3 (\textcolor{ForestGreen}{$\uparrow51\%$})} &
  \textbf{0.0 (\textcolor{ForestGreen}{$\uparrow0\%$})} &
  \textbf{1.0 (\textcolor{ForestGreen}{$\uparrow400\%$})} &
  \textbf{6.6 (\textcolor{ForestGreen}{$\uparrow0\%$})} &
  \textbf{13.7 (\textcolor{ForestGreen}{$\uparrow44\%$})}\\
\midrule
Gated DeltaNet & 15.8 & 29.4 & 20.3 & 16.9 & 10.7 & 13.5 & 17.8 \\
 w/ Log-linear   & 21.7 & 25.3 & 28.8 & 17.7 & 10.1 & 18.3 & 20.3 \\
\rowcolor{gray!15}
 w/ \sysname   &
 \textbf{27.3 (\textcolor{ForestGreen}{$\uparrow26\%$})} &
 \textbf{33.8 (\textcolor{ForestGreen}{$\uparrow15\%$})} &
 \textbf{51.7 (\textcolor{ForestGreen}{$\uparrow80\%$})} &
 \textbf{22.2 (\textcolor{ForestGreen}{$\uparrow25\%$})} &
 \textbf{12.1 (\textcolor{ForestGreen}{$\uparrow13\%$})} &
 \textbf{25.8 (\textcolor{ForestGreen}{$\uparrow41\%$})} &
 \textbf{28.8 (\textcolor{ForestGreen}{$\uparrow42\%$})}\\
\bottomrule
\end{tabular}%
}
\end{table}
\textbf{Performance on In-Context Retrieval Tasks.} Then, we evaluate the models on six in-context retrieval tasks following prior work~\citep{arora2024simple}. As shown in~\cref{tab:in-context-retrieval}, \sysname consistently outperforms the baseline methods with at most 49$\%$  performance improvement.

\begin{wraptable}{r}{0.60\textwidth}
\vspace{-2mm}
\centering
\caption{Evaluation results of single-needle tasks (S-NIAH-1--3) and multi-needle tasks (MK-1, MQ, MV) on \textbf{RULER} (4K context).}
\label{tab:ruler}
\resizebox{0.6\textwidth}{!}{%
\begin{tabular}{r|cccccc}
\toprule
\textbf{Model} & \textbf{S-NIAH-1} & \textbf{S-NIAH-2} & \textbf{S-NIAH-3} & \textbf{MK-NIAH-1} & \textbf{MQ-NIAH} & \textbf{MV-NIAH} \\
\midrule
Transformer               & 91.4 & 71.4 & 73.3 & 62.1 & 54.5 & 36.9 \\
\midrule
Mamba-2                   & 80.5 & 31.0 & 4.7 & 14.8 & 19.3 & 19.4 \\
\quad w/ Log-Linear       & 88.7  & 39.4 & 10.6 & 41.6 & 25.2 & 28.3 \\
\rowcolor{gray!15}
\quad w/ \sysname         & 100.0 (\textcolor{ForestGreen}{$\uparrow13\%$}) & 69.2 (\textcolor{ForestGreen}{$\uparrow76\%$}) & 37.1 (\textcolor{ForestGreen}{$\uparrow250\%$}) & 60.3 (\textcolor{ForestGreen}{$\uparrow45\%$}) & 42.5 (\textcolor{ForestGreen}{$\uparrow67\%$}) & 37.6 (\textcolor{ForestGreen}{$\uparrow33\%$}) \\
\midrule
Gated DeltaNet            & 100.0 & 74.0 & 53.2 & 19.1 & 19.5 & 14.8 \\
\quad w/ Log-Linear       & 100.0 & 81.6 & 48.8 & 43.8 & 31.3 & 26.6 \\
\rowcolor{gray!15}
\quad w/ \sysname         & 100.0 (\textcolor{ForestGreen}{$\uparrow0\%$})& 98.6 (\textcolor{ForestGreen}{$\uparrow21\%$})& 65.9 (\textcolor{ForestGreen}{$\uparrow24\%$})& 49.3 (\textcolor{ForestGreen}{$\uparrow13\%$}) & 34.2 (\textcolor{ForestGreen}{$\uparrow9\%$}) & 30.5 (\textcolor{ForestGreen}{$\uparrow15\%$}) \\
\bottomrule
\end{tabular}%
}
\vspace{-2mm}
\end{wraptable}

\textbf{Performance on Long-Context Modeling Tasks.} We next evaluated the models on long-context tasks, including long-context retrieval on RULER with 4k, 8k, 16k  length and long-context understanding on LongBench. As shown in~\cref{tab:ruler} and~\cref{tab:long_bench}, we make two main observations.

First, \sysname substantially improves long-context retrieval performance on RULER across both single-needle and multi-needle settings. Compared to the log-linear variant, \sysname achieves consistent and often large gains on Mamba-2 and Gated DeltaNet, with particularly pronounced improvements on harder multi-needle tasks (e.g., up to 350\% relative improvement on S-NIAH-3 and 67\% on MQ-NIAH). These results indicate that \sysname more effectively preserves and aggregates long-range information under extended contexts.

\begin{wraptable}{r}{0.60\textwidth}
\vspace{-2mm}
\centering
\caption{Performance on LongBench datasets~\citep{DBLP:conf/acl/BaiLZL0HDLZHDTL24} with different types of tasks.
}
\resizebox{0.60\textwidth}{!}{%
\begin{tabular}{r|ccc|ccc|ccc|ccc}
\toprule
& \multicolumn{3}{c|}{\textbf{Single-Doc QA}} & \multicolumn{3}{c|}{\textbf{Multi-Doc QA}} & \multicolumn{3}{c|}{\textbf{Summarization}} & \multicolumn{3}{c}{\textbf{Few-shot Learning}}\\ \midrule
 \textbf{Model} & {NQA} & {QQA} & {MFQ} 
          & {HQA} & {2WM} & {Mus} 
          & {GvR} & {QMS} & {MNs}
          & {TRC} & {TQA} & {SSM} \\\midrule
Transformer         & 8.4 & 9.6 & 19.6 & 11.0 & 20.9 & 6.4 & 12.8 & 9.7 & 9.5 & 21.0 & 43.2 & 14.4\\
\midrule
Mamba-2              & 5.1 & 10.6 & 11.9 & 10.2 & 14.5 & 4.7 & 6.3 & 5.5 & 3.3 & 2.3 & 20.9 & 6.8 \\
w/ Log-Linear & 6.8 & 9.6 & 12.2 & 9.5 & 19.0 & 4.4 & 6.4 & 8.1 & 3.5 & 12.4 & 18.6 & 9.8 \\
\rowcolor{black!10}
w/ \sysname          &
\textbf{9.4 (\textcolor{ForestGreen}{$\uparrow38\%$})} &
\textbf{11.1 (\textcolor{ForestGreen}{$\uparrow5\%$})} &
\textbf{16.1 (\textcolor{ForestGreen}{$\uparrow32\%$})} &
\textbf{12.5 (\textcolor{ForestGreen}{$\uparrow23\%$})} &
\textbf{23.9 (\textcolor{ForestGreen}{$\uparrow26\%$})} &
\textbf{8.7 (\textcolor{ForestGreen}{$\uparrow85\%$})} &
\textbf{8.3 (\textcolor{ForestGreen}{$\uparrow30\%$})} &
\textbf{12.2 (\textcolor{ForestGreen}{$\uparrow51\%$})} &
\textbf{9.3 (\textcolor{ForestGreen}{$\uparrow266\%$})} &
\textbf{18.7 (\textcolor{ForestGreen}{$\uparrow51\%$})} &
\textbf{31.4 (\textcolor{ForestGreen}{$\uparrow50\%$})} &
\textbf{16.5 (\textcolor{ForestGreen}{$\uparrow68\%$})}\\
\midrule
Gated DeltaNet       & 7.2 & 10.3 & 14.0 & 9.8 & 18.5 & 6.3 & 7.2 & 8.4 & 7.6 & 16.5 & 25.3 & 11.0\\
w/ Log-Linear & 6.9 & 6.1 & 15.4 & 11.5 & 20.2 & 5.5 & 6.1 & 9.9 & 4.3 & 11.0 & 27.7 & 11.2 \\
\rowcolor{black!10}
w/ \sysname          &
\textbf{11.3 (\textcolor{ForestGreen}{$\uparrow57\%$})} &
\textbf{15.1 (\textcolor{ForestGreen}{$\uparrow47\%$})} &
\textbf{17.4 (\textcolor{ForestGreen}{$\uparrow24\%$})} &
\textbf{17.2 (\textcolor{ForestGreen}{$\uparrow50\%$})} &
\textbf{25.4 (\textcolor{ForestGreen}{$\uparrow26\%$})} &
\textbf{7.0 (\textcolor{ForestGreen}{$\uparrow11\%$})} &
\textbf{8.7 (\textcolor{ForestGreen}{$\uparrow21\%$})} &
\textbf{11.8 (\textcolor{ForestGreen}{$\uparrow19\%$})} &
\textbf{9.9 (\textcolor{ForestGreen}{$\uparrow30\%$})} &
\textbf{31.2 (\textcolor{ForestGreen}{$\uparrow89\%$})} &
\textbf{41.5 (\textcolor{ForestGreen}{$\uparrow50\%$})} &
\textbf{19.7 (\textcolor{ForestGreen}{$\uparrow76\%$})}\\
\bottomrule
\end{tabular}
}
\label{tab:long_bench}
\vspace{-2mm}
\end{wraptable}

Second, on LongBench, \sysname consistently outperforms both vanilla and log-linear variants across diverse long-context understanding tasks, including narrative QA, multi-field QA, summarization, and few-shot learning. Notably, \sysname delivers strong and uniform gains across different task categories, suggesting that the benefits of \sysname extend beyond retrieval and generalize to complex reasoning and generation under long contexts.



\subsection{Inference Efficiency of DLA}
\begin{figure*}[t]
\centering
 \includegraphics[width=0.65\textwidth]{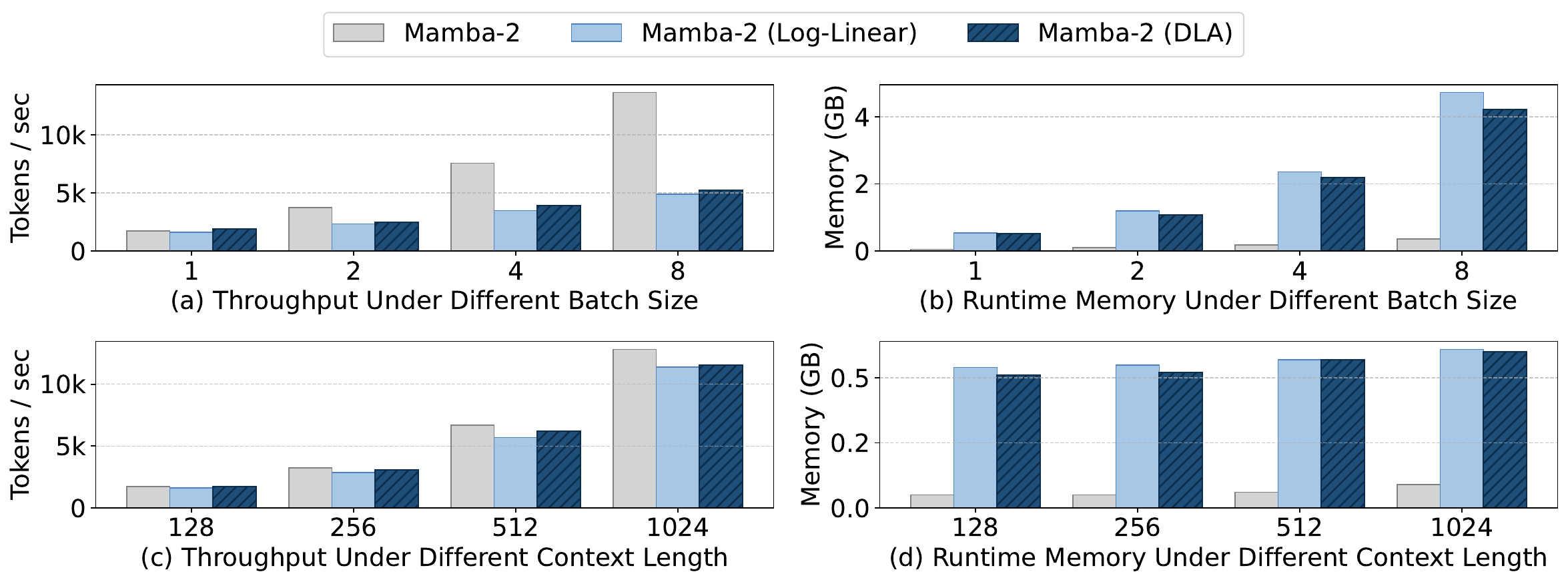}
\caption{Throughput (tokens/sec) and runtime memory consumption (GB) of vanilla, Log-Linear, and \sysname variants of Mamba-2 (780M) in prefill stage on a single A100 GPU under different bat ch sizes (a, b) and different sequence lengths (c, d).}
\label{fig:efficiency}
\end{figure*}

We next evaluate the efficiency of \sysname from two aspects: (1) efficiency under varying batch sizes and (2) efficiency under varying input context lengths.

\textbf{Efficiency Under Various Batch Sizes.} 
Figures~\cref{fig:efficiency}(a) and (c) report the throughput and runtime memory footprint under varying batch sizes with a fixed context length of 128 and decode length of 1. As batch size increases, both the log-linear and \sysname variants exhibit smaller throughput gains and higher memory usage than the vanilla Mamba-2, due to caching multiple summary states. Nevertheless, compared to log-linear attention, \sysname consistently achieves higher throughput with lower memory consumption, indicating better compute and memory efficiency.

\textbf{Efficiency Under Various Context Lengths.}
Figures~\cref{fig:efficiency}(b) and (d) show the throughput and KV memory footprint under varying context lengths with a fixed batch size of 1 and decode length of 1. As context length increases, both log-linear and \sysname variants incur higher memory usage and limited throughput improvement relative to the vanilla model, again due to maintaining multiple summary states. In contrast, \sysname consistently outperforms the log-linear variant in throughput while maintaining lower and more stable memory consumption, demonstrating superior efficiency under long-context settings.

\subsection{Ablation Studies}

\begin{wraptable}{r}{0.50\textwidth}
\vspace{-2mm}
\centering
\caption{Ablation study of Mamba-2 and Gated DeltaNet with different variants. \texttt{DLA(I)}\xspace denotes the version of \sysname with information-aware dynamic state merging only.}
\label{tab:ablation}
\resizebox{0.48\textwidth}{!}{%
\begin{tabular}{r|ccccc}
\toprule
\textbf{Model}
& \textbf{LMB.\textcolor{ForestGreen}{$\uparrow$}} 
& \textbf{PIQA\textcolor{ForestGreen}{$\uparrow$}} 
& \textbf{Hella.\textcolor{ForestGreen}{$\uparrow$}} 
& \textbf{Wino.\textcolor{ForestGreen}{$\uparrow$}} 
& \textbf{RULER\textcolor{ForestGreen}{$\uparrow$}}  \\
\midrule
Mamba-2   & 15.7 & 58.9 & 29.3 & 50.1 & 28.3 \\
 w/ Log-linear   & 13.2 & 59.7 & 27.8 & 49.5 & 39.0 \\
\rowcolor{gray!15}
  w/ \texttt{DLA(I)}   & 20.1 (\textcolor{ForestGreen}{$\uparrow28\%$}) & 61.5 (\textcolor{ForestGreen}{$\uparrow3\%$}) & 30.3 (\textcolor{ForestGreen}{$\uparrow3\%$}) & 50.9 (\textcolor{ForestGreen}{$\uparrow2\%$}) & 44.0 (\textcolor{ForestGreen}{$\uparrow13\%$}) \\
\rowcolor{gray!15}
  w/ \sysname   & 23.9 (\textcolor{ForestGreen}{$\uparrow52\%$}) & 63.7 (\textcolor{ForestGreen}{$\uparrow7\%$}) & 30.8 (\textcolor{ForestGreen}{$\uparrow5\%$}) & 51.5 (\textcolor{ForestGreen}{$\uparrow3\%$}) & 57.8 (\textcolor{ForestGreen}{$\uparrow48\%$}) \\
\midrule
Gated DeltaNet & 20.3 & 58.8 & 29.6 & 51.3 & 46.8 \\
 w/ Log-linear   & 19.0 & 60.4 & 28.4 & 51.9 & 55.4 \\
\rowcolor{gray!15}
  w/ \texttt{DLA(I)}   & 22.2 (\textcolor{ForestGreen}{$\uparrow9\%$}) & 61.3 (\textcolor{ForestGreen}{$\uparrow1\%$}) & 29.7 (\textcolor{ForestGreen}{$\uparrow5\%$}) & 52.0 (\textcolor{ForestGreen}{$\uparrow1\%$}) & 58.4 (\textcolor{ForestGreen}{$\uparrow5\%$}) \\
\rowcolor{gray!15}
 w/ \sysname   & 24.8 (\textcolor{ForestGreen}{$\uparrow22\%$}) & 62.7 (\textcolor{ForestGreen}{$\uparrow4\%$}) & 30.3 (\textcolor{ForestGreen}{$\uparrow6\%$}) & 52.4 (\textcolor{ForestGreen}{$\uparrow1\%$}) & 63.1 (\textcolor{ForestGreen}{$\uparrow14\%$}) \\
\bottomrule
\end{tabular}%
}
\vspace{-2mm}
\end{wraptable}

\textbf{Module Sensitivity Study.} We conduct ablation studies to evaluate the separate contribution of the two components of \sysname. Let \texttt{DLA(I)}\xspace denote the version of \sysname with information-aware dynamic state merging only. As shown in Table~\ref{tab:ablation}, we have two observations. (1) \texttt{DLA(I)}\xspace and \sysname variants consistently outperform Log-Linear variants across all benchmarks. (2) \sysname consistently outperforms \texttt{DLA(I)}\xspace across all benchmarks. This result demonstrates the unique contributions of the two components in \sysname.

\begin{wraptable}{r}{0.50\textwidth}
\vspace{-2mm}
\centering
\caption{Ablation study of Mamba-2 \sysname variant with different memory budget $k$ and merge boundary $\tau$.}
\label{tab:budget}
\resizebox{0.48\textwidth}{!}{%
\begin{tabular}{r|ccccc}
\toprule
\textbf{Budget Size}
& \textbf{LMB.\textcolor{ForestGreen}{$\uparrow$}} 
& \textbf{PIQA\textcolor{ForestGreen}{$\uparrow$}} 
& \textbf{Hella.\textcolor{ForestGreen}{$\uparrow$}} 
& \textbf{Wino.\textcolor{ForestGreen}{$\uparrow$}} 
& \textbf{RULER\textcolor{ForestGreen}{$\uparrow$}}  \\
\midrule
\texttt{$k$ = 20}   & 22.7 & 64.9 & 29.3 & 50.1 & 56.3 \\
\texttt{$k$ = 30}   & 23.9  & 63.7& 30.8  & 51.5  & 57.8\\
  \texttt{$k$ = 40}   & 24.2  & 63.3 & 29.8  & 50.9  & 57.7  \\
\midrule
\texttt{$\tau$ = 0.5}   & 20.1 & 63.6 & 29.7 & 51.3 & 57.3 \\
\texttt{$\tau$ = 0.6}   & 23.9  & 63.7& 30.8  & 51.5  & 57.8\\
  \texttt{$\tau$ = 0.7}   & 22.5  & 61.7 & 30.1  & 52.3  & 55.9  \\
\bottomrule
\end{tabular}%
}
\vspace{-2mm}
\end{wraptable}

\textbf{Impact of capacity $k$ and boundary $\tau$.} To study the impact of memory budget $k$ and the merge boundary $\tau$ on performance, we adjust the default budget in \sysname variant of Mamba-2 from 30 to 20 and 40 and adjust the default boundary from 0.6 to 0.5 and 0.7. We then compare the changes in performance. As shown in \cref{tab:budget}, changes in the memory budget and merge boundary have only a marginal effect on the final performance of \sysname, indicating that the proposed memory modeling is robust to these two hyperparameters.

\section{Related Work}
\label{sec:related_works}

To overcome the quadratic bottleneck of softmax attention on long sequences, linear attention and state space models (SSMs) reformulate attention computation to achieve $O(T)$ complexity.
Representative methods such as DeltaNet~\citep{DBLP:conf/nips/YangWZSK24} and Mamba~\citep{DBLP:journals/corr/abs-2312-00752} compress the entire history into a single recurrent state, continuously merging incoming tokens into a fixed-size summary for inference.
To alleviate the resulting over-compression, gating mechanisms~\citep{DBLP:conf/iclr/YangKH25} introduce data-dependent modulation to selectively attenuate obsolete information.
More recent approaches extend linear attention to multi-state memory.
In particular, Log-Linear Attention~\citep{DBLP:journals/corr/abs-2506-04761} maintains a logarithmic number of hierarchical states, where tokens are deterministically merged according to a fixed temporal schedule.
Despite their effectiveness, these methods rely on \emph{fixed merging policies} that ignore token-level information variation, leaving open the question of how to adaptively control state construction to preserve fine-grained information under long contexts.

\vspace{-2mm}
\section{Conclusion}
\label{sec:conclusion}

In this paper, we presented \sysname, a framework for multi-state linear attention. \sysname replaces fixed merging with information-aware dynamic state construction and uses capacity-bounded memory modeling to keep inference cost predictable. By allocating memory resolution based on token-level information variation, \sysname improves representation quality while preserving efficiency. We pre-train \sysname on two linear-attention backbones and evaluate it on 16 datasets across three aspects, where it consistently outperforms state-of-the-art baselines.

\vspace{-2mm}
\section*{Acknowledgement}
\vspace{-2mm}

This work is supported in part by NSF Award NeTS-2312675.

\bibliography{Reference}

@article{arora2023language,
  title={Language Models Enable Simple Systems for Generating Structured Views of Heterogeneous Data Lakes},
  author={Arora, Simran and Yang, Brandon and Eyuboglu, Sabri and Narayan, Avanika and Hojel, Andrew and Trummer, Immanuel and R{\'e}, Christopher},
  journal={Proceedings of the VLDB Endowment},
  volume={17},
  number={2},
  pages={92--105},
  year={2023},
  publisher={VLDB Endowment}
}

@inproceedings{lockard2019openceres,
  title={Openceres: When open information extraction meets the semi-structured web},
  author={Lockard, Colin and Shiralkar, Prashant and Dong, Xin Luna},
  booktitle={Proceedings of the 2019 Conference of the North American Chapter of the Association for Computational Linguistics: Human Language Technologies, Volume 1 (Long and Short Papers)},
  pages={3047--3056},
  year={2019}
}

@inproceedings{rajpurkar2018know,
  title={Know What You Don’t Know: Unanswerable Questions for SQuAD},
  author={Rajpurkar, Pranav and Jia, Robin and Liang, Percy},
  booktitle={Proceedings of the 56th Annual Meeting of the Association for Computational Linguistics (Volume 2: Short Papers)},
  pages={784--789},
  year={2018}
}

@article{kwiatkowski2019natural,
  title={Natural questions: a benchmark for question answering research},
  author={Kwiatkowski, Tom and Palomaki, Jennimaria and Redfield, Olivia and Collins, Michael and Parikh, Ankur and Alberti, Chris and Epstein, Danielle and Polosukhin, Illia and Devlin, Jacob and Lee, Kenton and others},
  journal={Transactions of the Association for Computational Linguistics},
  volume={7},
  pages={453--466},
  year={2019},
  publisher={MIT Press One Rogers Street, Cambridge, MA 02142-1209, USA journals-info~…}
}

@inproceedings{joshi2017triviaqa,
  title={TriviaQA: A Large Scale Distantly Supervised Challenge Dataset for Reading Comprehension},
  author={Joshi, Mandar and Choi, Eunsol and Weld, Daniel S and Zettlemoyer, Luke},
  booktitle={Proceedings of the 55th Annual Meeting of the Association for Computational Linguistics (Volume 1: Long Papers)},
  pages={1601--1611},
  year={2017}
}

@inproceedings{dua2019drop,
  title={DROP: A Reading Comprehension Benchmark Requiring Discrete Reasoning Over Paragraphs},
  author={Dua, Dheeru and Wang, Yizhong and Dasigi, Pradeep and Stanovsky, Gabriel and Singh, Sameer and Gardner, Matt},
  booktitle={Proceedings of the 2019 Conference of the North American Chapter of the Association for Computational Linguistics: Human Language Technologies, Volume 1 (Long and Short Papers)},
  pages={2368--2378},
  year={2019}
}

@inproceedings{arora2024simple,
  title={Simple linear attention language models balance the recall-throughput tradeoff},
  author={Arora, Simran and Eyuboglu, Sabri and Zhang, Michael and Timalsina, Aman and Alberti, Silas and Zou, James and Rudra, Atri and R{\'e}, Christopher},
  booktitle={Proceedings of the 41st International Conference on Machine Learning},
  pages={1763--1840},
  year={2024}
}

@inproceedings{DBLP:conf/nips/YangWZSK24,
  author       = {Songlin Yang and
                  Bailin Wang and
                  Yu Zhang and
                  Yikang Shen and
                  Yoon Kim},
  title        = {Parallelizing Linear Transformers with the Delta Rule over Sequence
                  Length},
  booktitle    = {NeurIPS},
  year         = {2024}
}

@inproceedings{DBLP:conf/emnlp/MihaylovCKS18,
  author       = {Todor Mihaylov and
                  Peter Clark and
                  Tushar Khot and
                  Ashish Sabharwal},
  title        = {Can a Suit of Armor Conduct Electricity? {A} New Dataset for Open
                  Book Question Answering},
  booktitle    = {{EMNLP}},
  pages        = {2381--2391},
  publisher    = {Association for Computational Linguistics},
  year         = {2018}
}

@inproceedings{DBLP:conf/iclr/YangKH25,
  author       = {Songlin Yang and
                  Jan Kautz and
                  Ali Hatamizadeh},
  title        = {Gated Delta Networks: Improving Mamba2 with Delta Rule},
  booktitle    = {{ICLR}},
  publisher    = {OpenReview.net},
  year         = {2025}
}

@article{DBLP:journals/corr/abs-2312-00752,
  author       = {Albert Gu and
                  Tri Dao},
  title        = {Mamba: Linear-Time Sequence Modeling with Selective State Spaces},
  journal      = {CoRR},
  volume       = {abs/2312.00752},
  year         = {2023}
}

@article{DBLP:journals/corr/abs-2506-04761,
  author       = {Han Guo and
                  Songlin Yang and
                  Tarushii Goel and
                  Eric P. Xing and
                  Tri Dao and
                  Yoon Kim},
  title        = {Log-Linear Attention},
  journal      = {CoRR},
  volume       = {abs/2506.04761},
  year         = {2025}
}

@article{DBLP:journals/corr/abs-2507-04416,
  author       = {Xiuying Wei and
                  Anunay Yadav and
                  Razvan Pascanu and
                  Caglar Gulcehre},
  title        = {{RAT:} Bridging {RNN} Efficiency and Attention Accuracy in Language
                  Modeling},
  journal      = {CoRR},
  volume       = {abs/2507.04416},
  year         = {2025}
}

@article{johnston2025generalizing,
  title={Generalizing the Cauchy-Schwarz inequality: Hadamard powers and tensor products},
  author={Johnston, Nathaniel and Plosker, Sarah and Torrance, Charles and Varona, Luis},
  journal={arXiv preprint arXiv:2507.10327},
  year={2025}
}

@inproceedings{10.1145/1644893.1644914,
    author = {Lumezanu, Cristian and Baden, Randy and Spring, Neil and Bhattacharjee, Bobby},
    title = {Triangle inequality variations in the internet},
    year = {2009},
    isbn = {9781605587714},
    publisher = {Association for Computing Machinery},
    address = {New York, NY, USA},
    url = {https://doi.org/10.1145/1644893.1644914},
    doi = {10.1145/1644893.1644914},
    booktitle = {Proceedings of the 9th ACM SIGCOMM Conference on Internet Measurement},
    pages = {177–183},
    numpages = {7},
    location = {Chicago, Illinois, USA},
    series = {IMC '09}
}

@inproceedings{DBLP:conf/naacl/TalmorHLB19,
  author       = {Alon Talmor and
                  Jonathan Herzig and
                  Nicholas Lourie and
                  Jonathan Berant},
  title        = {CommonsenseQA: {A} Question Answering Challenge Targeting Commonsense
                  Knowledge},
  booktitle    = {{NAACL-HLT} {(1)}},
  pages        = {4149--4158},
  publisher    = {Association for Computational Linguistics},
  year         = {2019}
}

@misc{eval-harness,
  author       = {Gao, Leo and Tow, Jonathan and Abbasi, Baber and Biderman, Stella and Black, Sid and DiPofi, Anthony and Foster, Charles and Golding, Laurence and Hsu, Jeffrey and Le Noac'h, Alain and Li, Haonan and McDonell, Kyle and Muennighoff, Niklas and Ociepa, Chris and Phang, Jason and Reynolds, Laria and Schoelkopf, Hailey and Skowron, Aviya and Sutawika, Lintang and Tang, Eric and Thite, Anish and Wang, Ben and Wang, Kevin and Zou, Andy},
  title        = {The Language Model Evaluation Harness},
  month        = 07,
  year         = 2024,
  publisher    = {Zenodo},
  version      = {v0.4.3},
  doi          = {10.5281/zenodo.12608602},
  url          = {https://zenodo.org/records/12608602}
}

@inproceedings{DBLP:conf/acl/BaiLZL0HDLZHDTL24,
  author       = {Yushi Bai and
                  Xin Lv and
                  Jiajie Zhang and
                  Hongchang Lyu and
                  Jiankai Tang and
                  Zhidian Huang and
                  Zhengxiao Du and
                  Xiao Liu and
                  Aohan Zeng and
                  Lei Hou and
                  Yuxiao Dong and
                  Jie Tang and
                  Juanzi Li},
  title        = {LongBench: {A} Bilingual, Multitask Benchmark for Long Context Understanding},
  booktitle    = {{ACL} {(1)}},
  pages        = {3119--3137},
  publisher    = {Association for Computational Linguistics},
  year         = {2024}
}

@article{DBLP:journals/corr/abs-2404-06654,
  author       = {Cheng{-}Ping Hsieh and
                  Simeng Sun and
                  Samuel Kriman and
                  Shantanu Acharya and
                  Dima Rekesh and
                  Fei Jia and
                  Yang Zhang and
                  Boris Ginsburg},
  title        = {{RULER:} What's the Real Context Size of Your Long-Context Language
                  Models?},
  journal      = {CoRR},
  volume       = {abs/2404.06654},
  year         = {2024}
}

@article{DBLP:journals/corr/abs-2102-03315,
  author       = {Sumithra Bhakthavatsalam and
                  Daniel Khashabi and
                  Tushar Khot and
                  Bhavana Dalvi Mishra and
                  Kyle Richardson and
                  Ashish Sabharwal and
                  Carissa Schoenick and
                  Oyvind Tafjord and
                  Peter Clark},
  title        = {Think you have Solved Direct-Answer Question Answering? Try ARC-DA,
                  the Direct-Answer {AI2} Reasoning Challenge},
  journal      = {CoRR},
  volume       = {abs/2102.03315},
  year         = {2021}
}

@article{DBLP:journals/cacm/SakaguchiBBC21,
  author       = {Keisuke Sakaguchi and
                  Ronan Le Bras and
                  Chandra Bhagavatula and
                  Yejin Choi},
  title        = {WinoGrande: an adversarial winograd schema challenge at scale},
  journal      = {Commun. {ACM}},
  volume       = {64},
  number       = {9},
  pages        = {99--106},
  year         = {2021}
}

@inproceedings{DBLP:conf/acl/ZellersHBFC19,
  author       = {Rowan Zellers and
                  Ari Holtzman and
                  Yonatan Bisk and
                  Ali Farhadi and
                  Yejin Choi},
  title        = {HellaSwag: Can a Machine Really Finish Your Sentence?},
  booktitle    = {{ACL} {(1)}},
  pages        = {4791--4800},
  publisher    = {Association for Computational Linguistics},
  year         = {2019}
}

@inproceedings{DBLP:conf/aaai/BiskZLGC20,
  author       = {Yonatan Bisk and
                  Rowan Zellers and
                  Ronan Le Bras and
                  Jianfeng Gao and
                  Yejin Choi},
  title        = {{PIQA:} Reasoning about Physical Commonsense in Natural Language},
  booktitle    = {{AAAI}},
  pages        = {7432--7439},
  publisher    = {{AAAI} Press},
  year         = {2020}
}

@inproceedings{DBLP:conf/acl/PapernoKLPBPBBF16,
  author       = {Denis Paperno and
                  Germ{\'{a}}n Kruszewski and
                  Angeliki Lazaridou and
                  Quan Ngoc Pham and
                  Raffaella Bernardi and
                  Sandro Pezzelle and
                  Marco Baroni and
                  Gemma Boleda and
                  Raquel Fern{\'{a}}ndez},
  title        = {The {LAMBADA} dataset: Word prediction requiring a broad discourse
                  context},
  booktitle    = {{ACL} {(1)}},
  publisher    = {The Association for Computer Linguistics},
  year         = {2016}
}

@inproceedings{DBLP:conf/icml/DaoG24,
  author       = {Tri Dao and
                  Albert Gu},
  title        = {Transformers are SSMs: Generalized Models and Efficient Algorithms
                  Through Structured State Space Duality},
  booktitle    = {{ICML}},
  publisher    = {OpenReview.net},
  year         = {2024}
}

@inproceedings{DBLP:conf/iclr/WanWZXTZWLXW025,
  author       = {Zhongwei Wan and
                  Xinjian Wu and
                  Yu Zhang and
                  Yi Xin and
                  Chaofan Tao and
                  Zhihong Zhu and
                  Xin Wang and
                  Siqi Luo and
                  Jing Xiong and
                  Longyue Wang and
                  Mi Zhang},
  title        = {{D2O:} Dynamic Discriminative Operations for Efficient Long-Context
                  Inference of Large Language Models},
  booktitle    = {{ICLR}},
  publisher    = {OpenReview.net},
  year         = {2025}
}

@article{wan2023efficient,
    title={Efficient Large Language Models: A Survey},
    author={Wan, Zhongwei and Wang, Xin and others},
    year={2023},
    journal={arXiv preprint arXiv:2312.03863},
}

@article{wang2024iot,
    title={IoT in the Era of Generative AI: Vision and Challenges},
    author={Wang, Xin and Wan, Zhongwei and Hekmati, Arvin and Zong, Mingyu and Alam, Samiul and Zhang, Mi and Krishnamachari, Bhaskar},
    journal={arXiv preprint arXiv:2401.01923},
    year={2024}
}
\bibliographystyle{unsrtnat}

\end{document}